\documentclass[twoside]{article}

\usepackage[accepted]{aistats2023}
% remove copyright string: DELETE FOR FINAL VERSION
% \makeatletter
% \renewcommand{\@copyrightspace}{}
% \makeatother

% If your paper is accepted, change the options for the package
% aistats2023 as follows:
%
% \usepackage[accepted]{aistats2023}
%
% This option will print headings for the title of your paper and
% headings for the authors names, plus a copyright note at the end of
% the first column of the first page.

% If you set papersize explicitly, activate the following three lines:
%\special{papersize = 8.5in, 11in}
%\setlength{\pdfpageheight}{11in}
%\setlength{\pdfpagewidth}{8.5in}

\usepackage[USenglish]{babel}
\usepackage{csquotes}

\usepackage{subcaption}
\usepackage{graphicx}
\usepackage{booktabs}
\usepackage{multirow}
\usepackage{amsmath,amssymb,amsthm}

\usepackage[disable]{todonotes}

\usepackage{hyperref}

\usepackage[backend=biber,style=authoryear,maxcitenames=2,maxbibnames=10,sortcites,sorting=ynt,uniquelist=minyear]{biblatex}
\let\citep\parencite
\let\citet\textcite
\addbibresource{main.bib}
\renewbibmacro{in:}{}
\defbibheading{bibliography}[References]{\subsubsection*{#1}}

\usepackage[capitalize,noabbrev]{cleveref}

\newtheorem{prop}{Proposition}

\DeclareMathOperator*{\argmin}{argmin}
\DeclareMathOperator*{\E}{\mathbb{E}}
\DeclareMathOperator{\MMD}{MMD}
\DeclareMathOperator{\MMDusq}{\widehat{MMD}_U^2}
\DeclareMathOperator{\MMDBsq}{\widehat{MMD}_B^2}

\newcommand{\ud}{\mathrm{d}}

\newcommand{\PP}{\mathbb{P}}
\newcommand{\QQ}{\mathbb{Q}}
\newcommand{\R}{\mathbb{R}}
\newcommand{\nullhyp}{\mathfrak{H}_0}
\newcommand{\althyp}{\mathfrak{H}_1}

\newcommand{\f}[1]{\textbf{\textcolor{red}{#1}}}
\newcommand{\s}[1]{\textcolor{blue}{#1}}
\newcommand{\T}[1]{\textcolor{green!40!black}{#1}}

\begin{document}

% If your paper is accepted and the title of your paper is very long,
% the style will print as headings an error message. Use the following
% command to supply a shorter title of your paper so that it can be
% used as headings.
%
%\runningtitle{I use this title instead because the last one was very long}

% If your paper is accepted and the number of authors is large, the
% style will print as headings an error message. Use the following
% command to supply a shorter version of the authors names so that
% they can be used as headings (for example, use only the surnames)
%
%\runningauthor{Surname 1, Surname 2, Surname 3, ...., Surname n}

\twocolumn[

\aistatstitle{MMD-B-Fair: Learning Fair Representations with Statistical Testing}

\aistatsauthor{Namrata Deka \And Danica J.\ Sutherland}

\aistatsaddress{%
\texttt{dnamrata@cs.ubc.ca}\\%
University of British Columbia
\And%
\texttt{dsuth@cs.ubc.ca}\\%
University of British Columbia \& Amii} ]

\begin{abstract}
We introduce a method, MMD-B-Fair, to learn fair representations of data via kernel two-sample testing. We find neural features of our data where a maximum mean discrepancy (MMD) test cannot distinguish between representations of different sensitive groups, while preserving information about the target attributes. Minimizing the power of an MMD test is more difficult than maximizing it (as done in previous work), because the test threshold's complex behavior cannot be simply ignored. Our method exploits the simple asymptotics of block testing schemes to efficiently find fair representations without requiring complex adversarial optimization or generative modelling schemes widely used by existing work on fair representation learning. We evaluate our approach on various datasets, showing its ability to ``hide'' information about sensitive attributes, and its effectiveness in downstream transfer tasks.
\end{abstract}

\section{INTRODUCTION}
Machine learning systems are increasingly being used for making critical and sensitive real-life decisions in domains like finance, criminal reform, hiring, health, etc.~\citep{Flores2016FalsePF, Skeem2016RISKRA, Bogen2018HelpWA, Chouldechova2018ACS, Lebovits2018AutomatingIH, Ledford2019MillionsOB, Wilson2019PredictiveII} The importance of designing non-discriminatory learning algorithms that can mitigate various biases regarding private and protected features like gender or race is crucial to building trustworthy AI systems. Often data collected from the real world are plagued with issues like under-representation of minority groups, correlated sensitive and target features, or drastic distributional shifts between training and testing phases~\citep{gianfrancesco2018potential, jo2020lessons}. All of these can lead to biased models that can make undesirable mistakes in the real world, and therefore we need to address this issue and develop systems that are robust to biases in data distributions.

Fair representation learning is one approach towards this goal, which tries to find data representations that satisfy certain fairness objectives~\citep{Zemel2013LearningFR,Edwards2016CensoringRW,Louizos2016TheVF,Zhang2018MitigatingUB,Madras2018LearningAF,Lahoti2020FairnessWD}. Most deep learning-based fair representation learning methods take one of two broad approaches: try to disentangle latent factors with a generative variational model then ultimately discard the sensitive factor from the representation, or mitigate bias via adversarial techniques where discriminator(s) attempt to predict the sensitive group from a learnt encoded representation. In this work, we explore a different route, using deep kernels and statistical two-sample testing.

Statistical two-sample tests are used to determine whether two sets of data samples come from the same underlying distribution. Our method is centered around the idea that if a machine learning system is fair with respect to certain protected attributes, then that system’s representation of one sensitive group should not be statistically distinguishable from the other. Our method learns fair representations by optimizing a neural network to minimize the test power -- the ability of a two-sample test to correctly distinguish two sets of samples -- for samples differing by the sensitive class label, while still finding a useful representation by maximizing the test power and/or classification accuracy for distinguishing ``target'' labels.

This framework avoids learning a generative model of the data or explicit adversarial training, by instead relying on tests based on the maximum mean discrepancy (MMD) \citep{gretton2012kernel} to compare different samples of representations. We use the MMD in a novel way, combining existing work on power optimization \citep{Sutherland2017GenerativeMA,liu2020learning} with block testing \citep{Zaremba2013BtestsLV} to give an effective criterion for driving down the test power of sensitive tests -- a problem not handled well by previous work which focuses only on maximizing power. Our method is supported by theoretical results as well as good empirical performance.

We first give a self-contained introduction to MMD-based testing in \cref{sec:prelim}, establishing all the tools we will need for our method for learning fair kernels and representations (\cref{sec:approach}), and emphasizing aspects important to our approach.

\section{PRELIMINARIES}
\label{sec:prelim}
Based on \textit{i.i.d.}\ samples $S_{\PP}$ and $S_{\QQ}$ from distributions $\PP$ and $\QQ$, respectively, the two-sample testing problem asks whether $S_{\PP}, S_{\QQ}$ come from the same distribution: does $\PP = \QQ$? We use the null hypothesis testing framework, i.e.\ ask whether we can confidently say that the observed $S_\PP$ and $S_\QQ$ would be unlikely to be so different if $\PP = \QQ$.

Traditional methods for two-sample tests, including $t$-tests and Kolmogorov-Smirnov tests,
do not scale to complex high-dimensional distributions. Another modern approach is based on classification accuracy and we will describe our approach's relationship to that scheme shortly.

\subsection{MAXIMUM MEAN DISCREPANCY (MMD)}
The MMD \citep{gretton2012kernel} is a measure of distance between distributions. For distributions $\PP$ and $\QQ$ over a domain $\mathcal X$ (the set of conceivable data points), the MMD is defined in terms of a kernel $k: \mathcal X \times \mathcal X \rightarrow \R$ giving the ``similarity'' of individual data points. This kernel should be positive semi-definite, the simplest case being the linear kernel $k(x, y) = x^\top y$, and the paradigmatic example being a Gaussian kernel $k(x, y) = \exp(-\tfrac{1}{2\sigma^2} \lVert x - y \rVert^2)$.

If $X, X' \sim \PP$ and $Y, Y' \sim \QQ$, then
\[
\MMD(\PP, \QQ; k) = \sqrt{\E[k(X,X')+k(Y,Y')-2k(X,Y)]}.
\]
With a \emph{characteristic} kernel $k$, such as the Gaussian, we have that $\MMD(\PP, \QQ; k)=0$ \textit{if and only if} $\PP = \QQ$. Thus, we can run a two-sample test by estimating the MMD, and rejecting the null hypothesis that $\PP = \QQ$ if the estimated MMD is too large to have occurred by chance.

\paragraph{$U$-STATISTIC ESTIMATOR}
Our default estimator will be the $U$-statistic estimator, which is unbiased for $\MMD^2$, and has almost minimal variance among unbiased estimators:\footnote{The MVUE would simply also include the $k(X_i, Y_i)$ terms; the difference in practice is usually trivial, but this form is slightly simpler and allows exact expressions for the variance.}
\begin{gather}
  \MMDusq(S_{\PP}, S_{\QQ}; k) = \frac{1}{m(m-1)}\sum_{i\neq j}H_{ij}  \label{eq:mmdusq} \\
  H_{ij} = k(X_i,X_j) + k(Y_i,Y_j) - k(X_i,Y_j) - k(Y_i,X_j) \notag
,\end{gather}
where $S_{\PP} = \{X_1, \dots, X_m\}, S_{\QQ} = \{Y_1, \dots, Y_m\}$ are \textit{i.i.d.} samples from $\PP$ and $\QQ$ respectively.

The most common scheme for testing based on \eqref{eq:mmdusq} is to choose some kernel $k$ a-priori, and then reject the null hypothesis $\nullhyp$ that $\PP = \QQ$
if the scaled estimator $m \MMDusq(S_\PP, S_\QQ; k)$ is larger than a threshold $c_\alpha$.
The rejection threshold, $c_\alpha$, should satisfy
$\Pr_{\nullhyp}\left(m \MMDusq(S_\PP, S_\QQ; k) > c_\alpha\right) \le \alpha$,
i.e.\ there is $\alpha$ probability of incorrectly rejecting $\nullhyp$ when it is true.
The estimate is scaled by $m$ because, as $m$ grows, $m \MMDusq(S_\PP, S_\QQ; k)$ converges in distribution to an infinite mixture of $\chi^2$ variables, with weights depending on $\PP = \QQ$ and $k$, but independent of $m$.
The rejection threshold $c_\alpha$ is the $(1-\alpha)$th quantile of the distribution over $m \MMDusq(S_\PP, S_\QQ; k)$ under $\nullhyp$ which can be approximated
with a scheme known as permutation testing, generally the preferred method in this case:
randomly divide $S_\PP \cup S_\QQ$ into two groups, compute $m \MMDusq$ between them and repeat, taking the empirical quantile of those samples \citep{Sutherland2017GenerativeMA}.

\paragraph{BLOCK ESTIMATOR}
An alternative approach,
called B-testing by \citet{Zaremba2013BtestsLV}, randomly splits the available samples into $b$ blocks each containing $B$ samples. This is more computationally efficient in its estimator and also allows avoiding permutation testing, as we will see shortly.
We compute $\MMDusq$ on each block separately and since each of those terms will be an independent unbiased estimator of the squared MMD, we can average them to obtaining the block-based estimator $\MMDBsq$.

Under $\nullhyp$, the estimate in each block converges in distribution to the kernel-dependent infinite mixture of $\chi^2$ variables as $B \to \infty$.
However, whether under $\nullhyp$ or $\althyp$,
the average of $b$ of these independent estimates will converge to a normal distribution by the central limit theorem:
\begin{equation}
    \sqrt{b} ( \MMDBsq - \MMD^2 ) \xrightarrow{d} \mathcal N(0, V_B)
    \label{eq:block-asymp}
,\end{equation}
with $V_B$ being the variance of $\MMDusq$ on samples of size $B$ (depending on $\PP$, $\QQ$, and $k$).
A block test, then, can take as its test statistic $\sqrt{b} \MMDBsq$
and use a threshold of $\sqrt{V_B} \, \Phi^{-1}(1 - \alpha)$,
with $\Phi$ the CDF of a standard normal.

To use this method,
it remains to estimate $\sqrt{V_B}$.
\citet{Zaremba2013BtestsLV} simply took the sample standard deviation of the $b$ batches, which is justified since the sample variance converges almost surely to $V_B$.
We will employ a different scheme in our use of the block estimator (to come).
Although block tests are more computationally efficient than $U$-statistic tests,
it turns out they are also proportionally less powerful \citep{ramdas2015adaptivity} and therefore, our primary tests will be based on $U$-statistics.

\subsection{LEARNING DEEP KERNELS}
MMD tests work well when the choice of kernel $k$ is appropriate; for complicated distributions, however, simple default choices may take unreasonable numbers of samples to obtain a significant power. For a powerful test in complex situations with realistic numbers of samples, we follow \citet{liu2020learning} in seeking the best kernel from a parameterized family of \emph{deep kernels}. Specifically, we take $k_{\omega}$ as a Gaussian kernel $\kappa$ on the output of a featurizer network $\phi_{\omega}$, $k_{\omega} = \kappa_\omega(\phi_{\omega}(x),\phi_{\omega}(y))$.
Here, $\phi_\omega$ is a deep neural network that extracts features from input points $x$ and $y$, whose parameters are contained within $\omega$, and $\kappa_\omega$ is a Gaussian kernel on those features whose length-scale is also contained in $\omega$. These kernels have seen success across a variety of areas \citep[e.g.][]{andrew:deep-kernels,mmdgans,jean:semisup-deep-kernels,li:ssl-hsic}.

To be able to reliably distinguish two distributions, we wish to find the deep kernel with the most powerful test: the one with the highest probability of correctly rejecting the null hypothesis when the alternative is true. For a $U$-statistic test, this probability is asymptotically
\begin{equation} \label{eq:power}
    {\Pr}_{\althyp}\left( m \MMDusq > c_{\alpha} \right)
    \rightarrow \Phi\left( \frac{\MMD^2 - c_\alpha / m }{\sqrt{V_m}} \right),
\end{equation}
where $\Phi$ is the CDF of a standard normal distribution,
and $V_m$ is the variance of the $\MMDusq$ estimator for samples of size $m$ from $\PP$ and $\QQ$ with the kernel $k$ \citep[Equation 2]{Sutherland2017GenerativeMA}. The terms on the right-hand side are fixed, unknown quantities depending on $\PP$, $\QQ$, and $k$; $\MMD^2$ and $c_\alpha$ do not depend on $m$. This formula comes from an asymptotic normality result for the estimator when $\MMD(\PP, \QQ; k) > 0$ \citep[Section 5.5]{Serfling1980ApproximationTO}.

\citet{Sutherland2017GenerativeMA,liu2020learning} conducted tests by dividing each of $S_\PP$ and $S_\QQ$ into ``training'' and ``test'' sets, finding a kernel approximately maximizing \eqref{eq:power} on the training sets, and then using that kernel to run a standard two-sample test on the independent test sets. To roughly maximize \eqref{eq:power}, they maximized an estimator of $\MMD^2 / \sqrt{V_m}$, the leading term when $m$ grows and the test is reasonably likely to reject ($m \MMD^2 > c_\alpha$).

Although this was not done in prior work, it will be important for our purposes to emphasize that \eqref{eq:power} is the asymptotic expression for the power of a test using $m$ samples, and so a given $k$, $\PP$, and $\QQ$ correspond to a whole curve of asymptotic powers depending on $m$. Inside \eqref{eq:power}, both $\MMD^2$ and $c_\alpha$ are independent of $m$, while, as we will see, $V_m$'s dependence on $m$ is exactly known thanks to the well-understood theory of $U$-statistics. Thus, we can estimate the power of an $m$-sample test using a \emph{different} number of samples $n$. For instance, we could get a rough estimate of the power of a large-sample test ($m = 2,000$) using a small mini-batch of size $n = 32$.

To roughly maximize \eqref{eq:power}, \citet{liu2020learning} maximized the estimator $\MMDusq / \sqrt{\widehat{V}_{m,\lambda}}$,
where $\widehat{V}_{m,\lambda}$ estimates $V_m$ by
\begin{equation} \label{eq:var-est}
\!\!\!
\frac{4}{m n^3} \sum_{i=1}^n \left( \sum_{j=1}^n H_{ij} \right)^2
- \frac{4}{m n^4} \left( \sum_{i=1}^n \sum_{j=1}^n H_{ij} \right)^2
+ \frac \lambda m
,\end{equation}
using $H_{ij}$ from \eqref{eq:mmdusq}.
For \citeauthor{liu2020learning}'s purposes, $m$ is a simple scalar multiplier on the objective and so need not be specified, but it will be  important for us to keep track of it, as we will see. They further proved uniform convergence of the estimator $\MMDusq / \sqrt{\widehat{V}_{m,\lambda}}$ to $\MMD^2 / \sqrt{V_m}$. \citet{Sutherland2017GenerativeMA} used a more complex unbiased estimator for $V_m$ \citep[see][]{sutherland:unbiased-mmd-variance}; an unbiased estimator for $V_m$ will not be unbiased for $\MMD^2 / \sqrt{V_m}$, however, and in fact we prove in \cref{app:no-unbiased} that \emph{no} unbiased estimator of that quantity exists. The biased estimator also worked better in our experiments.

\citet{Sutherland2017GenerativeMA} further mentioned, but did not try, using the threshold from permutation testing to estimate the full quantity \eqref{eq:power}; this is expected to be important for small $m$ or for tests with poor power (ignoring the $c_\alpha$ term means the overall asymptotic power cannot be less than $0.5$). This estimator, as an empirical quantile, is almost surely differentiable and straightforward to implement in deep learning libraries. We explore this further in \cref{sec:approach}.

As argued by \citet[Section 4]{liu2020learning}, learning a deep kernel for an MMD test is strictly more general than classifier two-sample tests \citep{c2st-orig,c2st}, which train a classifier between $\PP$ and $\QQ$ on the training split, then check whether it has nontrivial accuracy on the test split. The added generality tends to yield better tests in practice.

\section{LEARNING FAIR REPRESENTATIONS}\label{sec:approach}
Let $\PP^a$ and $\QQ^a$ be conditional distributions on a dataset that only differ by the value of the binary feature $a$ on which they condition: e.g.\ $\PP^a$ is the distribution of data where $a = 0$, and $\QQ^a$ the distribution where $a = 1$. Take corresponding sample sets $S_{\PP^a}$, $S_{\QQ^a}$. In this section we will outline our approach for learning either a fair kernel or a fair vector representation.

We will assume in this paper that the relevant attributes $a$ have two possible values, but extensions to a small number of discrete values are straightforward.

\subsection{LEARNING A FAIR KERNEL}
Our goal is to find a representation invariant with respect to a binary sensitive attribute $s$, meaning that it cannot distinguish $\PP^s$ and $\QQ^s$: the distribution of data points with $s = 0$ and those with $s = 1$. To achieve this, we would like to find a kernel which, when used in a two-sample test to distinguish $\PP^s$ and $\QQ^s$, achieves negligible power.

If this were our only goal, however, there is a trivial solution: use, say, $k(x, y) = 1$.
Instead, we would like a kernel that is also useful to distinguish \emph{target} pairs of distributions, say ones useful for a downstream task: one that has high test power between $\PP^t$ and $\QQ^t$. (In practice, we also include a classification loss in our objective, but we clarify this straightforward addition later.)

One simple extension to the objective function of \citet{liu2020learning} towards this goal would be to minimize an estimate of $\left((\MMD^t)^2 / \sqrt{V^t_m} - (\MMD^s)^2 / \sqrt{V^s_m}\right)$, where $(\MMD^a)^2$ and $V^a_m$ are computed for the learned kernel between $\PP^a$ and $\QQ^a$. However, this tends to be unable to appropriately ``balance'' the two objectives. If the power for the target test is near $1$, but the sensitive-attribute test still has high power, this objective would still be just as satisfied by driving up $(\MMD^t)^2 / \sqrt{V^t_m}$ -- increasing the asymptotic power of the target test, but only just barely -- as it would be by reducing $(\MMD^s)^2 / \sqrt{V^s_m})$.

To put the two attributes on the same scale, then, we should consider the full asymptotic power \eqref{eq:power}, and subtract estimators of the two,
resulting in the objective:
\begin{equation}\label{eq:perm-power}
\!\!\!\!\!\!
\resizebox{0.90\hsize}{!}{%
    $\Phi\left( \frac{(\MMD^t)^2 - c^t_\alpha / m }{\sqrt{V^t_m}} \right) - \Phi\left( \frac{(\MMD^s)^2 - c^s_\alpha / m }{\sqrt{V^s_m}} \right)$%
}
.\end{equation}
The thresholds $c^s_\alpha$ and $c^t_\alpha$, can be estimated using permutation tests as suggested by \citet{Sutherland2017GenerativeMA}. This makes the optimization substantially more computationally expensive; though it can be computed based on the same kernel matrix as $\MMDusq$ and $\widehat{V}_m$, it requires perhaps a hundred times as many matrix-vector multiplications as does $\MMDusq$. We also found that the strong dependence between $\hat c_\alpha$ and $\MMDusq$ computed on the same samples meant that optimization was rarely able to drive the asymptotic power for the sensitive attribute test below about $0.5$. Data splitting helped, but halves the effective batch size, and computational and sample complexity both suffer.

To avoid this problem, we instead optimize the power of a block test with $b$ blocks of size $B$. From the central limit result \eqref{eq:block-asymp}, we have that the power of a block test is, letting $t_\alpha = \Phi^{-1}(1 - \alpha)$ where $\Phi$ is the standard normal CDF,
\begin{align}
       \rho_{b,B}
  &  = \Pr_{\althyp}\Big( \sqrt b \, \MMDBsq > \sqrt{V_B} t_\alpha  \Big)   \notag
\\&  = \Pr_{\althyp}\left( \frac{\sqrt b \, (\MMDBsq - \MMD^2)}{\sqrt{V_B}} > t_\alpha - \frac{\sqrt b \MMD^2}{\sqrt{V_B}} \right)  \notag
\\&\to \Phi\left( \sqrt b \, \frac{\MMD^2}{\sqrt{V_B}} - t_\alpha \right)
\label{eq:block-power}
.\end{align}
The block test's constant asymptotic threshold gives us a simple form that is cheaper to compute than using the permutation test threshold in \eqref{eq:power}, is valid even for small values of the population power, and only uses the samples in the form of the ratio $\MMD^2 / \sqrt{V_B}$ - which we already know can estimated effectively \citep{liu2020learning}.
We can thus estimate the asymptotic power with
\begin{equation}
    \hat\rho_{b,B}
    = \Phi\left( \sqrt b \, \frac{\MMDusq}{\sqrt{\widehat V_{B,\lambda}}} - t_\alpha \right)
    \label{eq:block-power-est}
.\end{equation}
$\hat\rho_{b,B}$ will converge uniformly to $\rho_{b,B}$ over classes of deep kernels satisfying some technical assumptions
as a corollary of \citet{liu2020learning};
proof in \cref{app:convergence}.

Using \eqref{eq:block-power-est}, our objective to learn a fair kernel with sensitive attribute $s$ and target attribute $t$ is
\begin{equation}\label{eq:fair-kernel}
    \argmin_\omega \left[
        \hat\rho_{b,B}^s - \hat\rho_{b,B}^t
    \right]
.\end{equation}

Although we are optimizing a kernel based on the power $\rho_{b,B}$ of a block test, we do not use blocking in our estimator; we just find a more amenable objective based on the asymptotic power of a hypothetical block test -- closely related to power of the $U$-statistic test.

\subsection{LEARNING FAIR REPRESENTATIONS}
So far we have shown how to learn an optimal kernel that can simultaneously achieve high power for distinguishing target attributes, and low power for sensitive attributes.
If we wish to learn a feature \emph{representation} rather than a single kernel,
however, it is not enough that a \emph{particular} kernel cannot distinguish the sensitive attribute; we would ideally like that \emph{no} usage of that representation with any kernel can distinguish between $\PP^s$ and $\QQ^s$, while maintaining that at least one kernel can distinguish between $\PP^t$ and $\QQ^t$. That is, if we separate into a representation $\phi$ and a kernel $\kappa$ on that representation, we would like to solve
\begin{equation}\label{eq:minimax}
    \min_\phi \left[ \max_\kappa \hat\rho_{b,B}^s - \max_\kappa \hat\rho_{b,B}^t \right]
.\end{equation}

The objective \eqref{eq:minimax} could be optimized with an alternating minimax optimization scheme for the parameters of $\kappa$, looking something like an MMD-GAN \citep{mmdgans,binkowski:mmd-gans}. We find it sufficient in our experiments to use a much simpler scheme: a grid of Gaussian kernels of varying length-scales. This finds a fairer kernel than using a single Gaussian, preventing the representation $\phi$ from learning to just ``hide'' information at a very different scale than the single $\kappa$ examines, while being much simpler to implement and optimize than in alternating gradient schemes for GAN like models.

\subsection{CONDITIONAL POWER FOR STRONG CORRELATIONS}\label{sec:eq-version}
So far in our discussion, the two-sample tests are based on the distributions $\PP^s = \PP_{X \mid S = 0}$ and $\QQ^s = \QQ_{X \mid S = 1}$. This setting learns a representation that optimizes the demographic parity (\texttt{DP}), defined as \[ \texttt{DP} = 1 - \lvert P(\hat{T} = 1 \mid S = 0) - P(\hat{T} = 1 \mid S = 1) \rvert .\]
In our approach, this setting has the advantage of not requiring both target and sensitive labels simultaneously for any data point in the training set, i.e., it still works if we have separate collections of data points labeled for the target and for the sensitive attribute.
Moreover, it works even if we do not have a high-confidence labeling of the sensitive attribute, but instead have rough estimates collected e.g.\ via randomized response methods \citep{warner:randomized-response}.
The DP setting, however, struggles when the target and sensitive attributes are strongly correlated so that the sample pairs $(S_{\PP^t}, S_{\QQ^t})$ and $(S_{\PP^s}, S_{\QQ^s})$ come from very similar pairs of distributions. This makes the objective of minimizing the test power over one pair while maximizing the test power over the other very difficult.

To address this, we instead condition the sensitive pair over the target classes, and sample points from $\PP^{s \mid t} =\PP_{X \mid S = 0, T = t}$ and $\QQ^{s \mid t} = \QQ_{X \mid S = 1, T = t}$ for all values of $T$. This is now equivalent to maximizing for the equalized odds (\texttt{EO}) notion of fairness with respect to all distinct target classes $t$, defined as
\begin{multline*}
\texttt{EO} = 1 - \big\lvert
P(\hat{T} = t \mid T = t, S = 0)
\\ - P(\hat{T} = t \mid T = t, S = 1) \big\rvert
.\end{multline*}
This modifies the sensitive power objectives in \eqref{eq:fair-kernel} and \eqref{eq:minimax} to, summing over the possible values of $t$,
\begin{gather}
    \argmin_\omega \left[
        \left(\sum_t \hat\rho_{b,B}^{s \mid t}\right) - \hat\rho_{b,B}^t
    \right]
    ,\\
    \min_\phi \left[\max_\kappa \left(\sum_t \hat\rho_{b,B}^{s \mid t}\right) - \max_\kappa \hat\rho_{b,B}^t \right] .
    \label{eq:eq-objs}
\end{gather}

It is well-known that perfect demographic parity, $\texttt{DP} = 1$, is not generally compatible with perfectly equalized odds, $\texttt{EO} = 1$ \citep{Barocas2018FairnessAM}.
Even so, Theorem 3.1 of \citet{Zhao2020ConditionalLO} shows that classifiers satisfying $\texttt{EO} = 1$ have demographic parity gaps $\Delta_{\texttt{DP}}$ upper-bounded by the gap of a perfect classifier, and hence training with an equalized odds criterion does not strongly compromise demographic parity.

\subsection{ADDING A CLASSIFIER TASK LOSS}
Representations with strong power on a target task are likely able to strongly distinguish at least some portion of samples as belonging to a certain value of $t$.
If our final goal is to train a classifier, though,
it will help to try to ensure our representation can classify all points well, by adding a standard classification loss for $t$ to our objectives,
e.g.\
\begin{equation*}
    \min_{\phi,g}
        \left[ \max_\kappa \lambda_s\left(\sum_t \hat\rho_{b,B}^{s \mid t}\right) - \max_\kappa \lambda_t\hat\rho_{b,B}^t 
        + \lambda_{\text{cls}} L^t(g \circ \phi) \right]
,\end{equation*}
where $g$ is a classifier on $\phi$,
$L(g \circ \phi, t)$ is the cross-entropy loss of the classifier $g(\phi(x))$ with labels $t$,\footnote{For the equalized-odds objective, we evaluate the classification loss on all samples. For the demographic parity version, we only evaluate it on the points from $S_{\PP^t}$ and $S_{\QQ^t}$, to ensure the method does not require any samples with both $s$ and $t$ values.}{} and $\lambda_s$, $\lambda_t$, $\lambda_{\text{cls}}$ control the relative regularization strengths. We perform an ablation study showing the significance of the classifier loss in \cref{sec:experiments}.

\section{RELATED WORK}\label{sec:related}
Fair representation learning has of late (deservedly) found a lot of traction within the deep learning community \citep{Mehrabi2021ASO}. The growing popularity and success of adversarial learning has resulted in a substantial number of adversarial techniques to mitigate bias and enforce group fairness by training discriminators to distinguish one sensitive group (or sub-group) from another~\citep{Edwards2016CensoringRW, Xie2017ControllableIT, Zhang2018MitigatingUB, Madras2018LearningAF, Zhao2020ConditionalLO}. However, representations learnt via adversarial approaches do not completely ``hide" sensitive information as the learnt representations are dependent on the specific function classes (or architectural complexity) used for the discriminators. Variational methods, on the other hand, focus on learning disentangled latent spaces where sensitive factors can be separated from non-sensitive features~\citep{Louizos2016TheVF, Creager2019FlexiblyFR, Norouzi2020VariationalFI}. Other methods (including our proposed approach) try to enforce fairness by adding additional constraints in the learning objective to regularize the learned weights of the neural networks involved~\citep{Kamishima2012ConsiderationsOF,Hajian2016AlgorithmicBF,Zafar2017FairnessBD,Speicher2018AUA}.

There have also been, in particular, several MMD-based approaches to fair/invariant representation learning. \citet{Louizos2016TheVF} used the MMD as a regularizer to train fair variational autoencoders to impose statistical parity between latent embeddings across different sensitive groups. Recently, \citet{Oneto2020ExploitingMA} used the MMD with a similar intuition to ours to learn representations that transfer better to unseen tasks in a multitask setting. \citet{Veitch2021CounterfactualIT} use the MMD as regularizers to a classifier, choosing between the marginal and conditional form based on the causal direction of the task, to enforce counterfactual invariance. Most recently \citet{lee2022fairpca} proposed using the MMD to perform fair principal component analysis by penalizing the measure between dimensionality-reduced distributions over different protected groups. Our approach, although similar in spirit, uses the power of MMD two-sample tests rather than the raw MMD estimate, which avoids several pitfalls and is particularly important when simultaneous maximization and minimization are required -- something not previously explored in the kernel-methods community.

In \cref{sec:experiments}, we compare to several different baselines.
LAFTR~\citep{Madras2018LearningAF} employs an adversarial network to predict the sensitive class using the representations being simultaneously learnt by a target predictor.
CFAIR~\citep{Zhao2020ConditionalLO} conditionally aligns the representations for accuracy-fairness trade-off by using two adversaries (one for the positive class label and one for the negative label).
FCRL~\citep{Gupta2021ControllableGF} controls the mutual information between the representations and the sensitive labels with contrastive information estimators.
sIPM~\citep{Kim2022LearningFR} employs the sigmoid Integral Probability Metric (IPM) as the deviance measure over the learnt representations. This is perhaps the most closely related method to our approach of using an IPM measure to regularize the prediction function.

\section{EXPERIMENTS}\label{sec:experiments}
We evaluate both versions, \eqref{eq:minimax} and \eqref{eq:eq-objs}, of our proposed regularizer -- we call these MMD-B-Fair (DP) and MMD-B-Fair (Eq) -- against the baselines sIPM~\citep{Kim2022LearningFR}, FCRL~\citep{Gupta2021ControllableGF}, CFAIR~\citep{Zhao2020ConditionalLO} and LAFTR~\citep{Madras2018LearningAF}.
One testbed is the widely used UCI Adult dataset~\citep{Dua:2019} -- a structured dataset to predict whether an individual has income above \$50,000 USD while being fair to their gender. We also evaluate performance on COMPAS\footnote{\url{github.com/propublica/compas-analysis}} which contains criminal records of over 5000 people living in Florida. The task is to predict recidivism (binary) within the next two years while being sensitive to the race of an individual (also binary).
The final dataset we evaluate on is the Heritage Health\footnote{\url{foreverdata.org/1015/}} dataset, which contains records of insurance claims and physician information of over 60,000 patients. The primary task is to predict Charlson index - an estimate  of the risk of a patient's death over the next ten years - without being biased by the age at which they first claimed an insurance cover.

We present results of fairness-accuracy trade-offs and various downstream tasks along with an ablation study to investigate the importance of all of the terms in our loss function. Our code is available at \href{https://github.com/namratadeka/mmd-b-fair}{\nolinkurl{github.com/namratadeka/mmd-b-fair}}.
\begin{table}[h]
\centering
\resizebox{\columnwidth}{!}{%
\begin{tabular}{@{}c|c|ccc@{}}
\toprule
\textbf{Dataset}                 & \textbf{} & \textbf{Train} & \textbf{Val} & \textbf{Test} \\ \midrule
\multirow{2}{*}{Adult}           & $\chi^2$  & 1177.9         & 238.5        & 0.0           \\
                                 & p-value   & 3.96e-258      & 8.33e-54     & 1.0           \\ \midrule
\multirow{2}{*}{COMPAS}          & $\chi^2$  & 26.032         & 5.263        & 20.944        \\
                                 & p-value   & 3.35e-07       & 0.021        & 4.72e-06      \\ \midrule
\multirow{2}{*}{Heritage Health} & $\chi^2$  & 6565.2         & 1606.9       & 8260.8        \\
                                 & p-value   & 0              & 0            & 0             \\ \bottomrule
\end{tabular}%
}
\caption{$\chi^2$-test of independence between target and sensitive variables in the data.}
\label{tab:data-chi}
\end{table}

\paragraph{EXPERIMENTAL SETUP}
We train all the algorithms across different choices of their respective fairness hyper-parameters. For both versions of our method we set $\lambda_s$ to $\{0, 0.1, 1, 10, 100, 1000, 10000\}$ with a fixed $\lambda_t$ and $\lambda_{\text{cls}}$ of $1$. For sIPM, CFAIR and LAFTR we set the regularization strength to the same set of values as $\lambda_s$, and for FCRL we use a subset of the hyper-parameters ($\beta$ and $\lambda$) proposed in their paper. We train all models with a mini-batch size of 64 and report the average performance over ten independent seeds. Wherever possible, the encoder architecture is shared across different methods. More details about the training process can be found in \cref{app:training}.

We perform a $\chi^2$-test of independence between the sensitive and target attributes to better understand the performance over each dataset. The test statistics and respective p-values within each split is shown in \cref{tab:data-chi}. In the Adult dataset there is a co-variate shift between the train and test domains where the target and sensitive variables goes from being strongly dependent in the train set to being completely independent in the test set.
\paragraph{FAIRNESS}
\begin{figure}[h]
    \centering
    \includegraphics[width=\columnwidth]{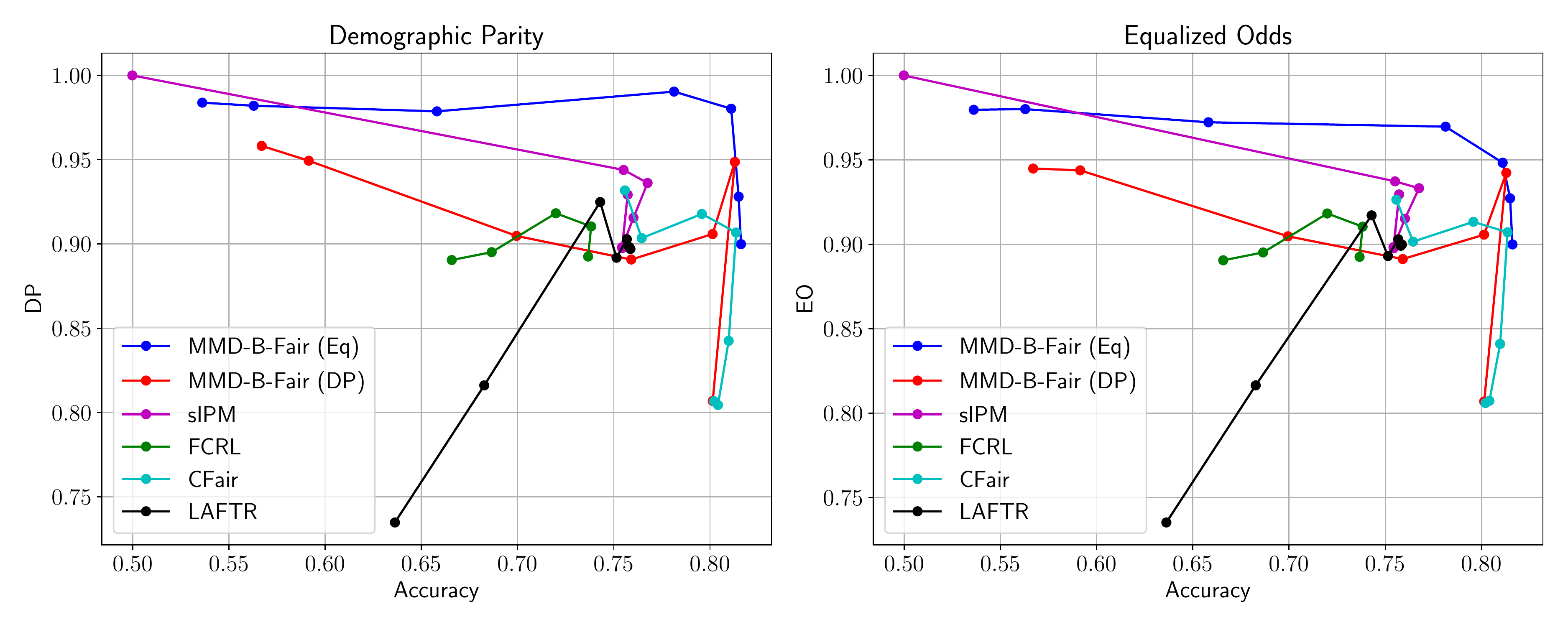}
    \includegraphics[width=\columnwidth] {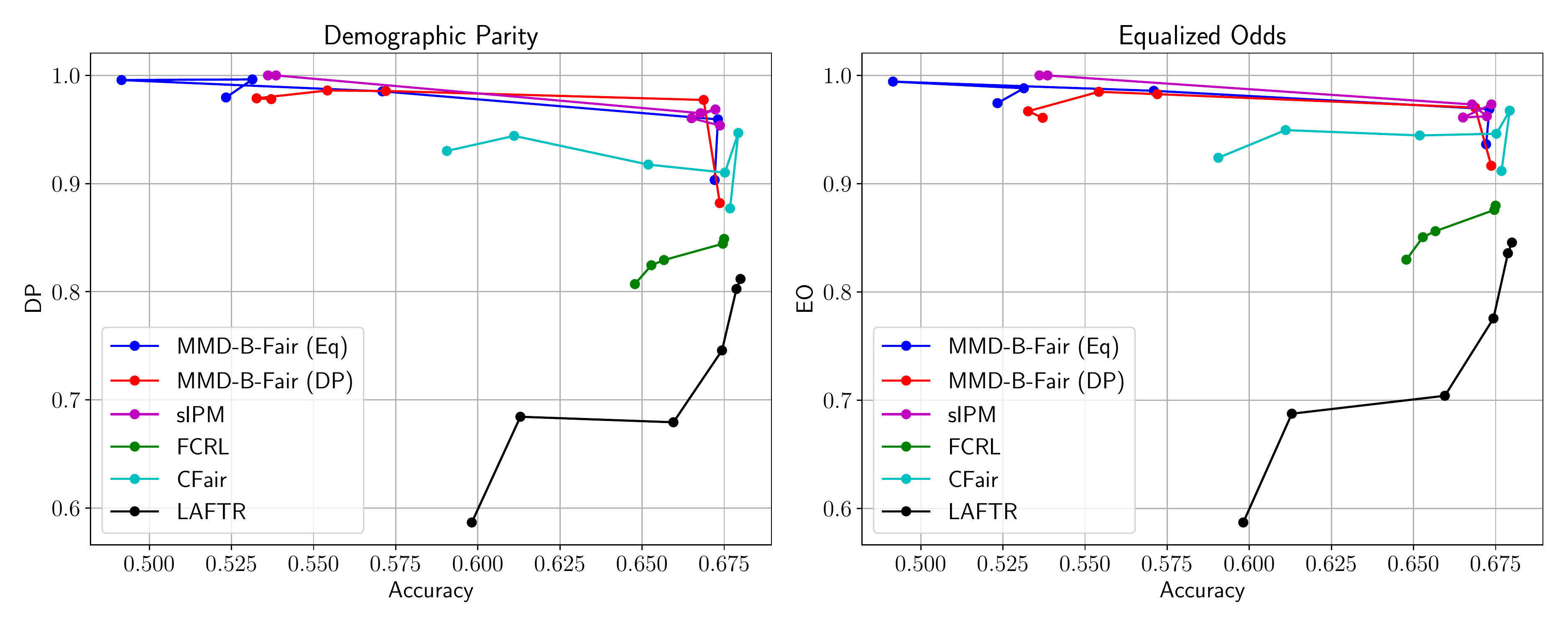}
    \includegraphics[width=\columnwidth] {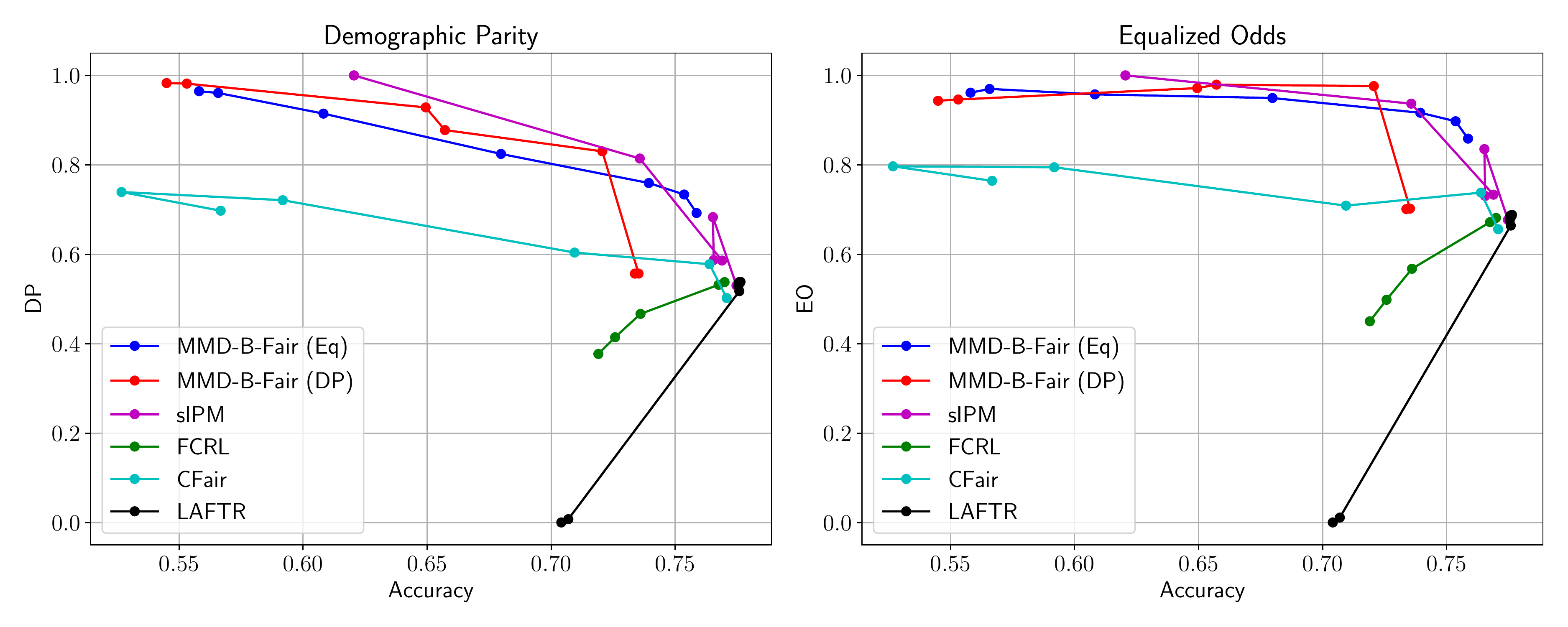}
    \caption{Fairness-accuracy trade-off curves on the test set of (top) Adult, (middle) COMPAS and (bottom) Heritage Health. Higher values for all metrics are better.}
    \label{fig:tradeoff}
\end{figure}
Firstly, we examine the fairness-accuracy tradeoff fronts obtained by sweeping over the fairness hyper-parameters in \cref{fig:tradeoff}.
The $x$-axis is the target accuracy; the $y$-axis reports the Demographic Parity (\texttt{DP}) and Equalized Odds (\texttt{EO}), averaged over both positive and negative target classes. Note that higher values are better.

For the Adult dataset (\cref{fig:tradeoff}, top), MMD-B-Fair (Eq) outperforms the baselines, concurrently achieving high accuracy scores and fairness measures. Recall the co-variate shift across the train and test split in this dataset further highlighting the robustness of our method compared to others.
In the absence of co-variate shift across splits, both of our methods and sIPM perform equally well on the COMPAS (\cref{fig:tradeoff}, middle) and Heritage Health (\cref{fig:tradeoff}, bottom) datasets.

\begin{figure}
    \centering
    \includegraphics[width=\columnwidth] {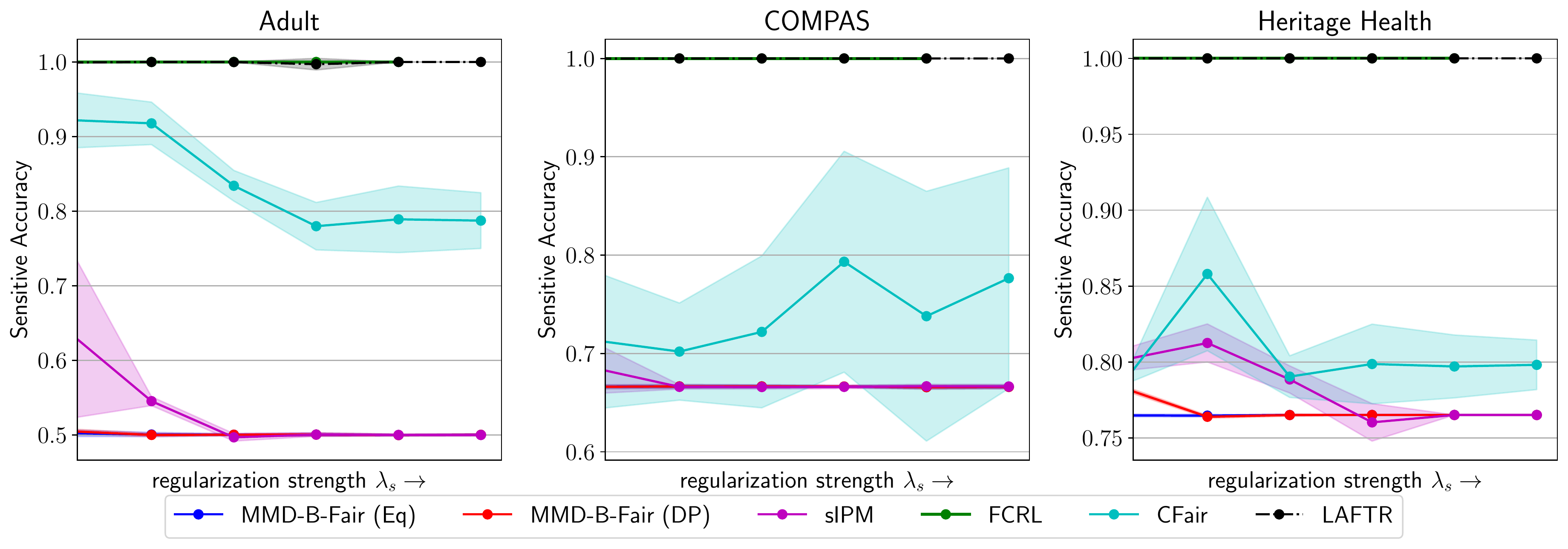}
    \caption{Downstream sensitive label classification over fair representations. Majority class probabilities: Adult: $0.5$, COMPAS: $0.66$, Heritage Health: $0.76$.}
    \label{fig:scls}
\end{figure}
\paragraph{EXAMINING LEARNT REPRESENTATIONS}
A popular method for evaluating fair models is to examine if the learnt representations contain enough information to predict the sensitive labels: if all information regarding the sensitive attributes is successfully hidden in the representation learning phase, then subsequent classifiers will struggle to discriminate between the sensitive classes and learn to assign the majority class label to each sample to maximize the classification accuracy. This accuracy will be equal to the probability of the majority class label. On the test set, these probabilities are $0.5$ for Adult, $0.66$ for COMPAS and $0.76$ for Heritage Health. We train MLP classifiers over the learnt representations, and show in \cref{fig:scls} the sensitive classification performance as a function of the fairness regularization strengths used to train the underlying fair models.
\begin{figure}
    \centering
    \includegraphics[width=\columnwidth] {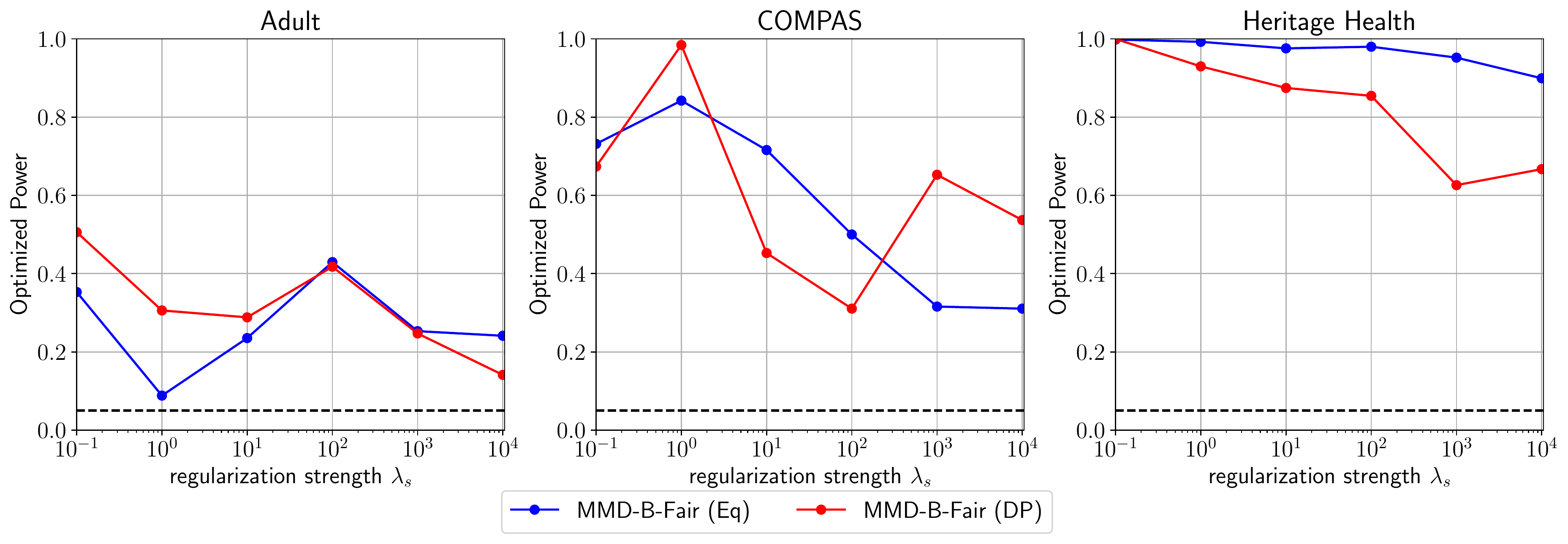}
    \caption{Empirical test power with an optimized kernel to maximize sensitive power over learnt representations. }
    \label{fig:skernel}
\end{figure}

\begin{figure*}[t]
  \centering
  \includegraphics[width=0.16\linewidth]{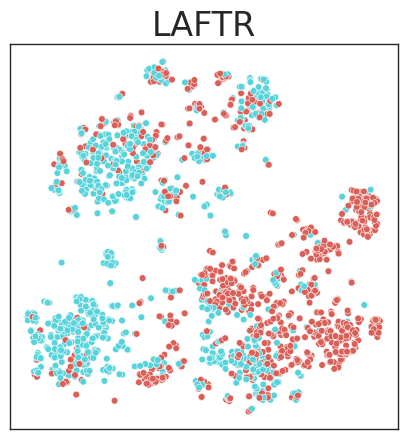}
  \includegraphics[width=0.16\linewidth]{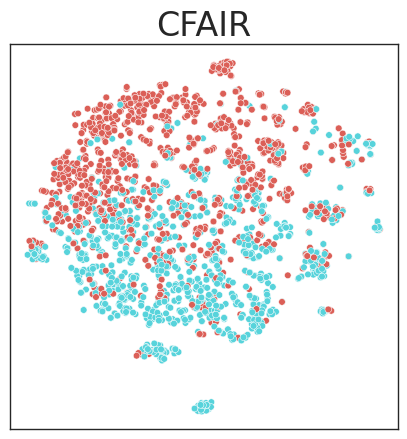}
  \includegraphics[width=0.16\linewidth]{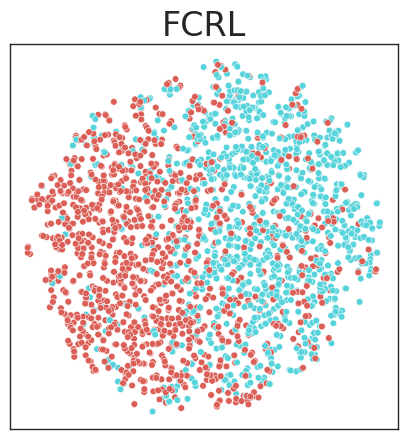}
  \includegraphics[width=0.16\linewidth]{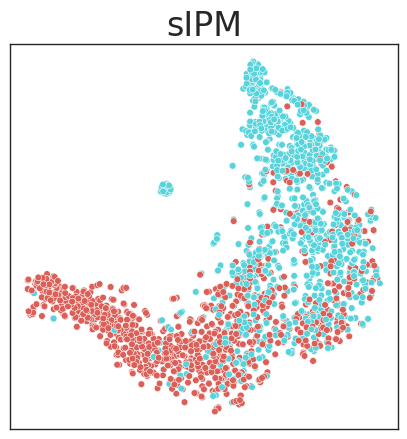}
  \includegraphics[width=0.16\linewidth]{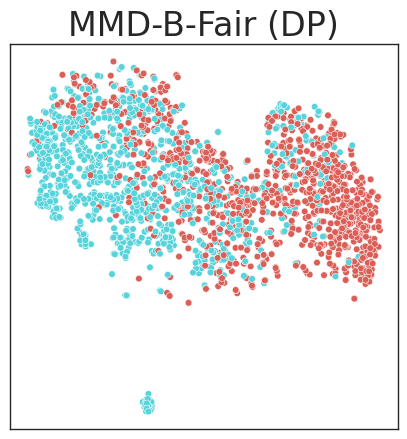}
  \includegraphics[width=0.16\linewidth]{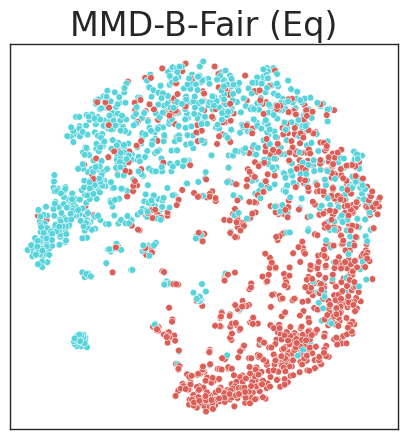}
  \\
  \includegraphics[width=0.16\linewidth]{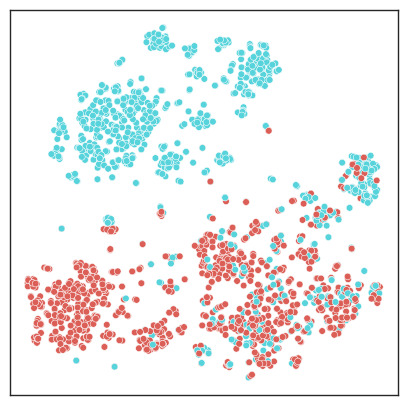}
  \includegraphics[width=0.16\linewidth]{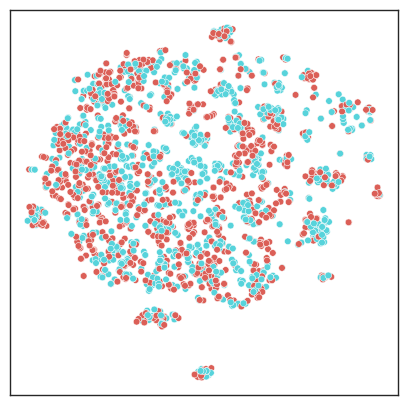}
  \includegraphics[width=0.16\linewidth]{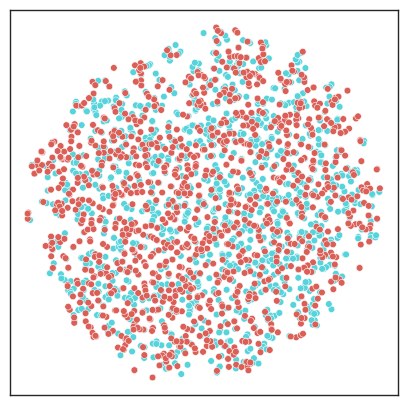}
  \includegraphics[width=0.16\linewidth]{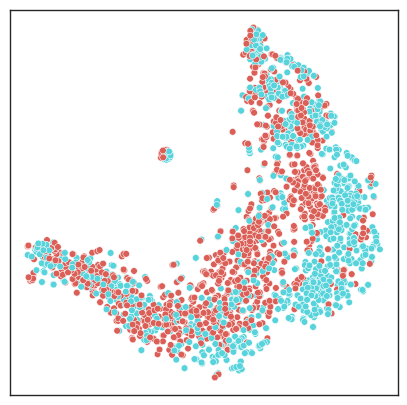}
  \includegraphics[width=0.16\linewidth]{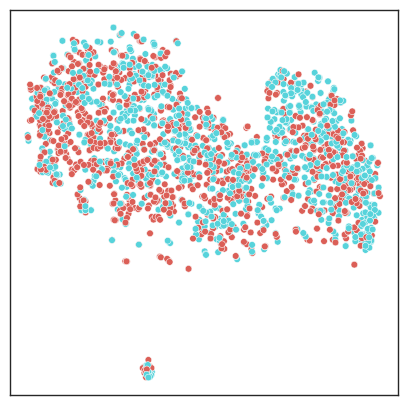}
  \includegraphics[width=0.16\linewidth]{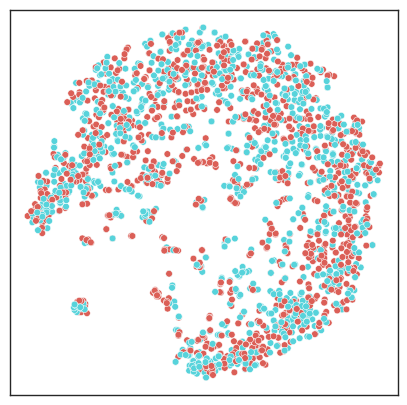}
  
  \caption{t-SNE visualizations of Adult representations, colored by target attribute (top) and sensitive attribute (bottom).}
  \label{fig:tsne}
\end{figure*}
Both versions of our method, as well as sIPM, are able to maintain the desired accuracy score equal to the fraction of the majority sensitive label in the test set for each dataset. sIPM converges to the ideal accuracy at sightly higher regularization strengths compared to MMD-B-Fair, while classifiers over representations from FCRL and LAFTR easily achieve perfect sensitive accuracy scores of 100\% even with strong regularization indicating their failure to be invariant to sensitive information.

Checking the sensitive accuracy is essentially a classifier-based two-sample test \citep{c2st} between $\PP^s$ and $\QQ^s$ based on the learnt representations.
We also try using a more sensitive measure of whether these representations are the same:
the power of an MMD two-sample test with a learned kernel, which is more general and often more powerful than a classifier-based test \citep{liu2020learning}. For models with classification accuracies significantly above random performance, this power will be near-perfect as it might be that even if few individual points can be correctly classified, a two-sample test will be able to distinguish the distributions as a whole.
We run this check for our methods in \cref{fig:skernel}, using a Gaussian kernel with a learnt length-scale over a one-layer MLP architecture trained to roughly maximize the asymptotic power $\hat\rho_{b,B}^s$ operating on top of the fair representations as input. We then evaluate the empirical power of this test i.e., how many times it rejects the null hypothesis, while repeating the test with 64 samples at a time. As expected, two-sample tests are far more sensitive measures of attribute leakage than classification accuracy.
\begin{table}[h]
\centering
\resizebox{\columnwidth}{!}{%
\begin{tabular}{@{}c|c|c|c|c|c|c|c@{}}
\toprule
Transfer Label &  & LAFTR & CFAIR & FCRL & sIPM & \begin{tabular}[c]{@{}c@{}}MMD-B-Fair \\ (DP)\end{tabular} & \begin{tabular}[c]{@{}c@{}}MMD-B-Fair \\ (Eq)\end{tabular} \\ \midrule
          & acc &  {57.2} &  {62.5} &  {58.0}&\f{72.8}&\s{71.3}&\T{70.3}\\
MSC2a3    & DP  &  {52.3} &  {65.1} &\f{99.2}&  {69.3}&\T{72.2}&\s{84.5}\\
          & Eq  &  {57.4} &  {70.1} &\f{98.0}&  {69.9}&\T{71.8}&\s{86.6}\\ \midrule
          & acc &\f{72.9} &\s{72.2} &  {53.9}&\T{72.4}&  {70.7}&  {69.4}\\
METAB3    & DP  &  {52.3} &\T{65.1} &\f{97.7}&  {54.5}&  {65.6}&\s{82.1}\\
          & Eq  &  {61.3} &\T{77.1} &\f{97.6}&  {63.4}&  {74.6}&\s{92.1}\\ \midrule
          & acc &  {66.4} &  {65.9} &  {59.3}&\f{70.6}&\T{67.5}&\s{67.8}\\
ARTHSPHIN & DP  &  {52.3} &  {65.1} &\f{98.0}&  {74.6}&\T{83.0}&\s{87.7}\\
          & Eq  &  {54.9} &  {70.1} &\f{98.1}&  {76.7}&\T{84.9}&\s{90.0}\\ \midrule
          & acc &  {64.4} &  {61.9} &  {60.1}&\f{68.0}&\T{67.1}&\s{67.3}\\
NEUMENT   & DP  &  {52.3} &  {65.1} &\f{99.1}&  {72.9}&\T{86.8}&\s{94.5}\\
          & Eq  &  {54.9} &  {69.7} &\f{97.5}&  {73.2}&\T{86.7}&\s{95.4}\\ \midrule
          & acc &  {71.0} &  {67.3} &  {69.3}&\f{73.5}&\s{73.0}&\T{72.5}\\
MISCHRT   & DP  &  {52.3} &  {65.1} &\f{98.6}&  {85.0}&\T{87.2}&\s{96.4}\\
          & Eq  &  {59.4} &  {79.0} &\f{98.2}&  {88.5}&\T{88.6}&\s{97.5}\\ \bottomrule
\end{tabular}%
}
\caption{Using Heritage Health representations to predict various downstream tasks. \f{Red} marks the best result per row, \s{blue} second-best, and \T{green} third-best.}
\label{tab:health-transfer}
\end{table}

\Cref{fig:tsne} shows $t$-SNE visualizations of learnt latent space embeddings,
further demonstrating that our method's representations separate the target attribute well and make the sensitive attribute difficult to distinguish.

\paragraph{FAIR TRANSFER LEARNING}
A major goal of fair \emph{representation} learning, rather than simply finding a fair classifier, is to be able to use the same representations for more than one potential downstream task. We would like our representations to have good (and fair) performance for classifiers when trained on tasks unknown at the representation learning time, even for downstream classifiers that are trained without any concern about fairness at all: the representations should enforce it.

To model this situation, we take representations learned to predict Charlson Index on Heritage Health and use them to predict each of five Primary Condition Groups, which were left out in the original representation learning phase. We train these classifiers without regard to fairness by simply minimizing the cross-entropy loss.

\cref{tab:health-transfer} shows the resulting accuracy scores with respect to the transfer labels and fairness scores with respect to the original sensitive labels of downstream classifiers trained on each representation. With these representations, MMD-B-Fair (Eq) provides stronger fairness results than any competitor except FCRL (which is quite inaccurate), while being more accurate than any competitor except sIPM (which is quite unfair).

\paragraph{ABLATION STUDY}
\begin{figure}
    \centering
    \includegraphics[width=\columnwidth] {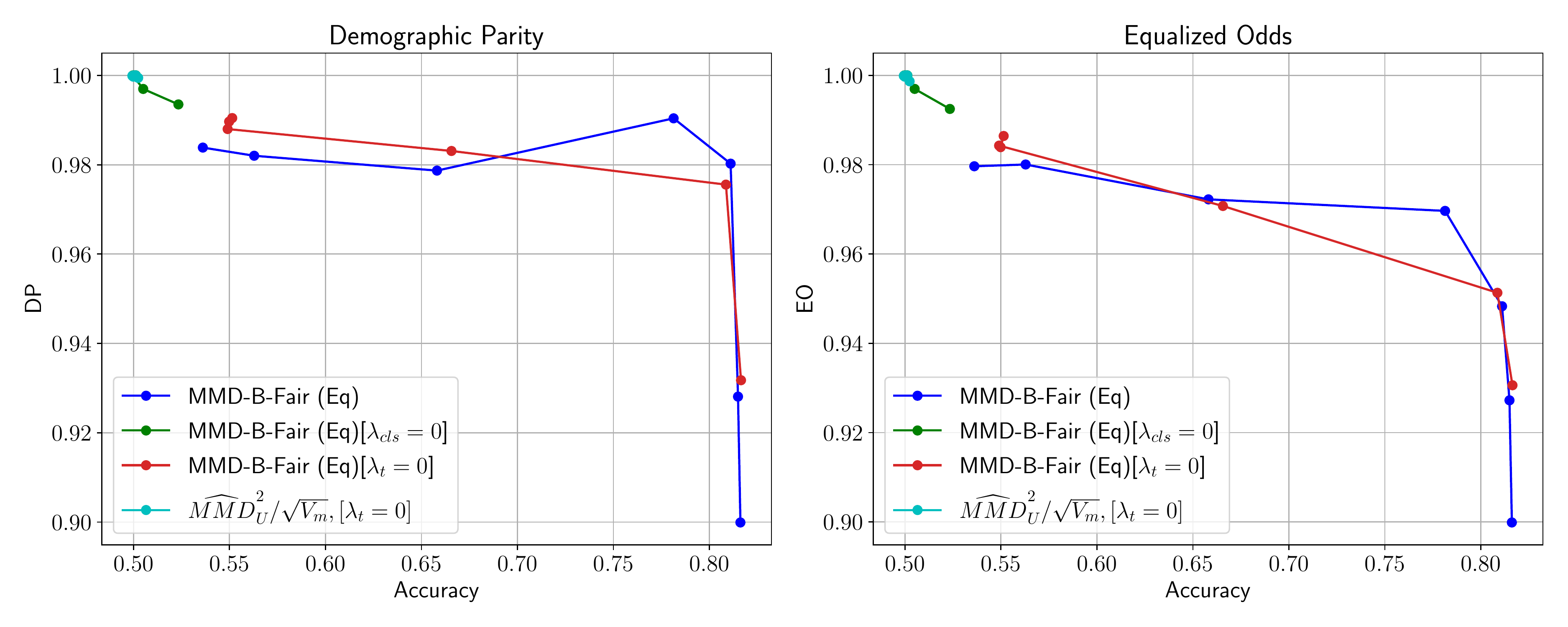}
    \caption{Assessing the contribution of each loss term on the performance on Adult.}
    \label{fig:ablation}
\end{figure}
Since our objective consists of three terms - target classification loss, sensitive power and target power - we perform an ablation study in \cref{fig:ablation} to ascertain the contribution of each term to learning fair representations that can achieve high target accuracy. When the classification loss is turned off by setting $\lambda_{cls}$ to $0$, we see from the tradeoff curve that a downstream classifier trained on top of the learnt representations fail to achieve a good accuracy score. Turning off the target power instead (by setting $\lambda_t = 0$) does not have this effect, however the fairness metrics are slightly impacted at the high accuracy regime. Supposing this is not a significant drop in fairness measures, we also train a model that directly minimizes the normalized sensitive MMD instead of the power (which, recall from our discussion in \cref{sec:approach} was used to balance both sensitive and target terms when used together). However, in this case we observe that the MMD measure by itself overwhelms the classifier leading to representations that are perfectly fair but come at the cost of random target classification performance.

\section{CONCLUSION}\label{sec:outro}
We proposed a method for learning fair kernels as well as representations built off of two-sample testing -- a different paradigm than previous approaches to learning fair representations. Our approach combines two-sample techniques in a novel way by using the $U$-statistic estimator to estimate the power of a block test which may also be useful for other testing approaches where one may need to minimize a test power.

We provide two different versions of our approach -- the  \texttt{DP} (demographic parity) version which can be trained using weak set-level labels from disjoint datasets, albeit at a disadvantage when dealing with correlated features, and a conditional (equalized odds) version, which can handle correlation between features well. Our method performs well compared to previous approaches based on adversarial learning and generative modelling when the dependency between the target and sensitive attributes is not the same in the train and test sets, i.e., when the i.i.d. assumption is violated. Downstream tasks like fair transfer learning also achieve a better balance between fairness and accuracy when using our learnt representations.

Areas for future work include extending to continuous-valued sensitive attributes via the Hilbert-Schmidt Independence Criterion \citep{hsic} and exploring applications in domain adaptation, invariant feature learning, causal representation learning, etc.

\subsubsection*{ACKNOWLEDGEMENTS}
This work was supported in part by the Natural Sciences and Engineering Resource Council of Canada (NSERC), the Canada CIFAR AI Chairs program, WestGrid, SHARCNET, Calcul Qu\'ebec, and the Digital Resource Alliance of Canada.
We would also like to particularly thank an anonymous reviewer for pointing out a flaw in our framing of a previous version of the algorithm.

% \clearpage
\newrefcontext[sorting=nyt]
\printbibliography

\clearpage
\onecolumn
\appendix

\section{Non-existence of an unbiased estimator} \label{app:no-unbiased}

\begin{prop}
For any fixed kernel $k$,
let
$J(\PP, \QQ) = \MMD^2(\PP, \QQ) / \sqrt{V_m(\PP, \QQ)}$
for some $m > 2$.
Let $\mathcal P$ be some class of distributions such that $\{ (1-\alpha) \PP_0 + \alpha \PP_1 : \alpha \in [0, 1] \} \subseteq \mathcal P$,
where $\PP_0 \ne \PP_1$ are two distributions
with $\MMD(\PP_0, \PP_1) > 0$.
Then no estimator of $J$ can be unbiased on $\mathcal P$.
\end{prop}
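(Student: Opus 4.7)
The plan is to restrict $\PP = \PP_\alpha$ and $\QQ = \PP_\beta$ along the mixture family (both of which lie in $\mathcal P$) and show that $J(\PP_\alpha, \PP_\beta)$ is not a polynomial in $(\alpha, \beta)$. Expanding each factor of the product density as $(1-\alpha)\, d\PP_0 + \alpha\, d\PP_1$ (and similarly with $\beta$), the expectation of any statistic $\hat J(X_1,\ldots,X_n,Y_1,\ldots,Y_{n'})$ under $n$ samples from $\PP_\alpha$ and $n'$ from $\PP_\beta$ is a polynomial in $(\alpha, \beta)$; unbiasedness would therefore force $J$ restricted to this family to be polynomial, which I will contradict.

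The mean-embedding identity gives $\MMD^2(\PP_\alpha, \PP_\beta) = (\alpha-\beta)^2 \|\tau\|^2$ with $\tau = \mu_{\PP_1} - \mu_{\PP_0}$ and $\|\tau\|^2 > 0$ by hypothesis. I then analyze $V_m$ on the diagonal $\alpha = \beta$ using the standard $U$-statistic variance decomposition $V_m = \tfrac{4(m-2)}{m(m-1)}\zeta_1 + \tfrac{2}{m(m-1)}\zeta_2$. Writing $H_{ij} = \langle \phi(X_i) - \phi(Y_i), \phi(X_j) - \phi(Y_j)\rangle_{\mathcal H}$ with $\phi$ the RKHS feature map of $k$, a short computation gives $\zeta_1 = (\alpha-\beta)^2 [\mathrm{Var}_{\PP_\alpha}(g) + \mathrm{Var}_{\PP_\beta}(g)]$ with $g(x) = \langle\phi(x),\tau\rangle_{\mathcal H}$ (so $\zeta_1$ vanishes on the diagonal), while $\zeta_2(\PP_\alpha, \PP_\alpha) = 4\,\mathrm{tr}(\mathrm{Cov}_{\PP_\alpha}(\phi)^2)$. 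Expanding $\mathrm{Cov}_{\PP_\alpha}(\phi) = \E_{\PP_\alpha}[\phi\phi^\top] - \E_{\PP_\alpha}[\phi]\E_{\PP_\alpha}[\phi]^\top$ shows it is a quadratic polynomial in $\alpha$ with $\alpha^2$-coefficient $-\tau\tau^\top$, so $\alpha \mapsto \zeta_2(\PP_\alpha, \PP_\alpha)$ is a degree-$4$ polynomial with leading coefficient $\mathrm{tr}((\tau\tau^\top)^2) = \|\tau\|^4 > 0$. Consequently $V_m(\PP_\alpha, \PP_\alpha)$ is a nonconstant, strictly positive polynomial in $\alpha$.

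To finish, suppose $J(\PP_\alpha, \PP_\beta)$ were polynomial in $(\alpha, \beta)$. Since $J = (\alpha-\beta)^2 \|\tau\|^2 / \sqrt{V_m(\PP_\alpha, \PP_\beta)}$ is smooth across the diagonal (where $V_m > 0$), both $J$ and $\partial_\beta J$ vanish at $\beta = \alpha$, so $(\alpha-\beta)^2$ divides $J$ in $\R[\alpha,\beta]$. Writing $J = (\alpha-\beta)^2 R(\alpha,\beta)$ with $R$ polynomial and specializing to $\beta = \alpha$ yields $R(\alpha, \alpha) = \|\tau\|^2 / \sqrt{V_m(\PP_\alpha, \PP_\alpha)}$, which must itself be polynomial in $\alpha$; but a unique-factorization argument in $\R[\alpha]$ shows that $c/\sqrt{p(\alpha)}$ can be polynomial only when $p$ is a nonzero constant, contradicting the non-constancy just established. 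The main obstacle is the polynomial analysis of $V_m$ on the null diagonal: the decisive step is identifying the $\alpha^4$-coefficient of $\mathrm{tr}(\mathrm{Cov}_{\PP_\alpha}(\phi)^2)$ as $\|\tau\|^4 = \MMD(\PP_0, \PP_1)^4$, precisely where the hypothesis $\MMD(\PP_0, \PP_1) > 0$ is used.
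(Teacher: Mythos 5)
Your proof is correct, and while it shares the paper's overall Bickel--Lehmann strategy (unbiasedness forces the expectation of the estimator, and hence $J$ restricted to the mixture family, to be polynomial in the mixture weights), your decisive non-polynomiality argument takes a genuinely different route. The paper fixes $\QQ = \PP_1$, expands the exact variance formula for $V_m(\PP_\alpha, \PP_1)$, and shows the $\alpha^3$ coefficient is generically nonzero, with a somewhat hand-waved fallback (``replace $\PP_1$ by some $\PP_\beta$'') in case that coefficient happens to cancel. You instead vary both arguments along the family and localize at the null diagonal $\beta = \alpha$, where the $U$-statistic is degenerate: $\zeta_1$ vanishes, so $V_m(\PP_\alpha, \PP_\alpha) = \tfrac{2}{m(m-1)}\zeta_2 = \tfrac{8}{m(m-1)}\operatorname{tr}\bigl(\mathrm{Cov}_{\PP_\alpha}(\phi)^2\bigr)$, a degree-four polynomial in $\alpha$ whose leading coefficient $\lVert \mu_1 - \mu_0 \rVert^4 = \MMD(\PP_0,\PP_1)^4$ is strictly positive by hypothesis and therefore can never cancel; combined with the double vanishing of the numerator on the diagonal and the $(\beta-\alpha)^2$ divisibility argument, this gives the contradiction without any generic-position step, which is arguably the cleaner of the two treatments. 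The price is the (routine) two-variable polynomial bookkeeping, plus one point you leave implicit: you need $V_m(\PP_\alpha,\PP_\alpha) > 0$ for $\alpha \in (0,1)$ so that $J$ is defined and smooth near the diagonal and the polynomial $\E[\hat J]$ agrees with $J$ there; this does follow from the hypothesis, since a vanishing covariance operator would make $\phi$ almost surely constant under $\PP_\alpha$ and hence force $\mu_0 = \mu_1$. Your identifications of $\zeta_2$ under the null and of the $\alpha^4$ coefficient both check out against the exact Hoeffding variance decomposition, so the argument is sound as written.
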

\begin{proof}
We follow \citet{binkowski:mmd-gans} in using the broad approach of \citet{bickel-lehmann}.
Let $\PP_\alpha = (1 - \alpha) \PP_0 + \alpha \PP_1$ denote a mixture between $\PP_0$ and $\PP_1$.

Suppose there is some unbiased estimator $\hat J(X, Y)$,
meaning that for some finite $n_1$ and $n_2$,
\[
\E_{\substack{X \sim \PP^{n_1}\\Y \sim \QQ^{n_2}}} \hat J(X, Y)
= J(\PP, \QQ)
.\]
Then,
for any fixed $\QQ \in \mathcal P$,
the function
\begin{align*}
     R(\alpha)
  &= J(\PP_\alpha, \QQ)
\\&= \int \cdots \int
    \hat J(X, Y)
    \, \ud\PP_\alpha(X_1) 
    \cdots
    \ud\PP_\alpha(X_{n_1}) 
    \, \ud \QQ^{n_2}(Y)
\\&= \int \cdots \int
    \hat J(X, Y)
    \, [(1-\alpha) \ud\PP_0(X_1) + \alpha \ud \PP_1(X_1)]
    \cdots
    \, \ud \QQ^{n_2}(Y)
\\&= (1 - \alpha)^{n_1} \E_{\substack{X \sim \PP_0^{n_1}\\Y \sim \QQ^{n_2}}}[\hat J(X, Y)]
   + \cdots
   + \alpha^{n_1} \E_{\substack{X \sim \PP_1^{n_1}\\Y \sim \QQ^{n_2}}}[\hat J(X, Y)]
\end{align*}
must be a polynomial in $\alpha$.

But, if we pick $\QQ = \PP_1$,
we will show that
\[
     R(\alpha)
   = \frac{\MMD^2(\PP_\alpha, \PP_1)}{\sqrt{V_m(\PP_\alpha, \PP_1)}}
\]
is not a polynomial,
and thus no unbiased estimator can exist on $\mathcal P$.

To do this, we will need some notation,
and some unfortunately tedious calculations.
Let
\begin{align*}
  \PP_\alpha &= (1-\alpha) \PP_0 + \alpha \PP_1 \\
  \mu_\alpha &= \E_{X \sim \PP_\alpha} k(X, \cdot) 
    = (1-\alpha) \mu_0 + \alpha \mu_1 \\
  C_\alpha &= \E_{X \sim \PP_\alpha} k(X, \cdot) \otimes k(X, \cdot)
    = (1-\alpha) C_0 + \alpha C_1
,\end{align*}
where $\mu_\alpha$ is the kernel mean embedding of $\PP_\alpha$,
and $C_\alpha$ its (uncentered) covariance operator.
Here $k(x, \cdot)$ is the embedding of the point $x$ into the RKHS corresponding to the kernel $k$,
satisfying $\langle k(x, \cdot), k(y, \cdot) \rangle = k(x, y)$,
and $a \otimes b$ is the outer product of two vectors in a Hilbert space, a linear operator such that $[a \otimes b] c = a \langle b, c \rangle$.

The numerator of $R(\alpha)$ is
\[
    \MMD(\PP_\alpha, \PP_1)^2
    = \lVert (1-\alpha) \mu_0 + \alpha \mu_1 - \mu_1 \rVert^2
    = (1 - \alpha)^2 \MMD(\PP_0, \PP_1)
.\]
The denominator is much more complex,
but equation (2) of \citet{sutherland:unbiased-mmd-variance} shows that
\begin{align*}
    V_m(\PP_\alpha, &\PP_1) = \frac{2}{m (m-1)} \Big[ 
\\&   2(m-2) \langle \mu_\alpha, C_\alpha \mu_\alpha \rangle
    - (2m-3) \lVert \mu_\alpha \rVert^2
\\&   2(m-2) \langle \mu_1, C_1 \mu_1 \rangle
    - (2m-3) \lVert \mu_1 \rVert^2
\\& + 2(m-2) \langle \mu_1, C_\alpha \mu_1 \rangle
    + 2(m-2) \langle \mu_\alpha, C_1 \mu_\alpha \rangle
    - 2(2m-3) \langle \mu_\alpha, \mu_1 \rangle^2
\\& - 4(m-1) \langle \mu_\alpha, (C_\alpha + C_1) \mu_1 \rangle
    + 4(m-1) \left( \lVert \mu_\alpha \rVert^2 + \lVert \mu_1 \rVert^2 \right) \langle \mu_\alpha, \mu_1 \rangle
\\& + \E_{(X, X') \sim \PP_\alpha^2} k(X, X')^2
    + \E_{(Y, Y') \sim \PP_1^2} k(Y, Y')^2
    + 2 \E_{X \sim \PP_\alpha, Y \sim \PP_1} k(X, Y)^2
    \Big].
\end{align*}
We need not give a full expansion of $V_m$ in terms of $\alpha$;
we will merely show that it is of degree three.
Since the ratio of a degree-two polynomial with the square root of a degree-three polynomial cannot possibly be itself polynomial,
that will suffice to show that $R(\alpha)$ is not polynomial,
and hence no unbiased estimator exists.

To see this, notice that $\mu_\alpha$ and $C_\alpha$ are each linear in $\alpha$,
so that any term containing fewer than three such terms, e.g.\ $\lVert \mu_\alpha \rVert^2$ or $\langle \mu_\alpha, C_1 \mu_\alpha \rangle$, cannot possibly be of degree three and so is not relevant to our goal.
The expectations of squared kernels are also not relevant:
the highest-order in terms of $\alpha$ is
\begin{multline*}
    \E_{X, X' \sim \PP_\alpha} k(X, X')^2
    = (1-\alpha)^2 \E_{X, X' \sim \PP_0} k(X, X')^2
    + 2 \alpha (1-\alpha) \E_{\substack{X \sim \PP_0\\X' \sim \PP_1}} k(X, X')^2
    + \alpha^2 \E_{X, X' \sim \PP_1} k(X, X')^2
\end{multline*}
which is $\mathcal O(\alpha^2)$, abusing notation slightly to mean ``terms of degree 2 or lower in $\alpha$.''
This leaves us
\begin{align*}
    V_m(\PP_\alpha, &\PP_1) = \frac{2}{m (m-1)} \Big[ 
      2(m-2) \langle \mu_\alpha, C_\alpha \mu_\alpha \rangle
     + 4(m-1) \lVert \mu_\alpha \rVert^2 \langle \mu_\alpha, \mu_1 \rangle
    \Big]
    + \mathcal O(\alpha^2).
\end{align*}
We can find the $\alpha^3$ terms by
\begin{align*}
     \langle \mu_\alpha, C_\alpha \mu_\alpha \rangle
  &= (1-\alpha) \langle \mu_\alpha, C_\alpha \mu_0 \rangle
   + \alpha \langle \mu_\alpha, C_\alpha \mu_1 \rangle
\\&= \alpha \langle \mu_\alpha, C_\alpha (\mu_1 - \mu_0) \rangle + \mathcal O(\alpha^2)
\\&= \alpha^2 \langle \mu_\alpha, (C_1 - C_0) (\mu_1 - \mu_0) \rangle + \mathcal O(\alpha^2)
\\&= \alpha^3 \langle \mu_1 - \mu_0, (C_1 - C_0) (\mu_1 - \mu_0) \rangle + \mathcal O(\alpha^2)
\end{align*}
and
\begin{align*}
     \lVert \mu_\alpha \rVert^2 \langle \mu_\alpha, \mu_1 \rangle
  &= \alpha \langle \mu_\alpha, \mu_\alpha \rangle \langle \mu_1 - \mu_0, \mu_1 \rangle + \mathcal O(\alpha^2)
\\&= \alpha^2 \langle \mu_\alpha, \mu_1 - \mu_0 \rangle \langle \mu_1 - \mu_0, \mu_1 \rangle + \mathcal O(\alpha^2)
\\&= \alpha^3 \langle \mu_1 - \mu_0, \mu_1 - \mu_0 \rangle \langle \mu_1 - \mu_0, \mu_1 \rangle + \mathcal O(\alpha^2)
.\end{align*}
Because we assumed $\MMD(\PP_0, \PP_1) > 0$,
we have $\mu_1 \ne \mu_0$.
Thus these two terms cancel only if
\[
    \left\langle
        \mu_1 - \mu_0, 
        \left[
            (m-2) (C_1 - C_0)
            + 2(m-1) (\mu_1 - \mu_0) \otimes \mu_1
        \right]
        (\mu_1 - \mu_0)
    \right\rangle
    = 0
.\]
Now, suppose we had
defined $R(\alpha)$ with $\QQ = \PP_\beta$ rather than $\PP_1$ for some other $\beta \in [0, 1]$.
The only relevant thing that changes is that the lone $\mu_1$ above becomes $\mu_\beta$;
the numerator stays quadratic in $\alpha$.
Thus, if the terms cancel for $\mu_1$,
we can simply choose a different $\mu_\beta$
for which they do not cancel,
which will always be possible.
% TODO: not totally confident in this argument...
Thus the denominator is the square root of a degree-three polynomial,
$R(\alpha)$ is not a polynomial,
and no unbiased estimator can exist.
\end{proof}

\section{Uniform convergence of our objective} \label{app:convergence}
We show here that optimizing the approximated block-test power from \eqref{eq:block-power-est} with a finite number of samples from each conditional distribution works, i.e.\ as $m$ increases, our power estimate converges uniformly over the parameter space towards an optimal solution.

\citet{liu2020learning} proved that with probability at least $1 - \delta$ over the choice of $n$ samples used in the estimators
\begin{equation} \label{eq:existing-uc}
    \sup_{k \in \mathcal K}
    \left\lvert
    \frac{\MMDusq}{\sqrt{n \widehat V_{n, n \cdot n^{-1/3}}}}
    - \frac{\MMD^2}{\sqrt{\lim_{m \to \infty} m V_m}}
    \right\rvert
    \le \alpha(\mathcal K, \PP, \QQ, n, \delta)
\end{equation}
for some function $\alpha$
(given asymptotically in their Theorem 6 and Proposition 9, or with full constants in their Theorem 11 and Proposition 23; see also their Remarks 24 and 25).
Here $\mathcal K$ is the class of considered kernels;
note that $m V_m$ converges to a constant.

Notice from \eqref{eq:var-est} that,
for any $m$ and $\ell$,
$\widehat V_{\ell,\lambda} = \frac{m}{\ell} \widehat V_{m,\lambda}$.
% and so
% \[
%     \frac{\MMDusq}{\sqrt{\widehat V_{m, \lambda}}}
%     = \sqrt{\frac{\ell}{m}} \; \frac{\MMDusq}{\sqrt{\widehat V_{\ell, \lambda}}}
% .\]
Thus we can rewrite \eqref{eq:block-power-est} as
\[
    \hat \rho_{b,B}
    = \Phi\left(
        \sqrt{b} \frac{\MMDusq}{\sqrt{\widehat V_{B,\lambda}}}
        - t_\alpha
    \right)
    = \Phi\left(
        \sqrt{b B} \;
        \frac{\MMDusq}{\sqrt{n \widehat V_{n,\lambda}}}
        - t_\alpha
    \right)
    = \Phi\left(
        \sqrt{b B} \;
        \hat J_\lambda
        - t_\alpha
    \right)
,\]
where we defined
$\hat J_\lambda = \MMDusq / \sqrt{n \widehat V_{n,\lambda}}$.

Defining
$J = \MMD^2 / \sqrt{\lim_{m \to \infty} m V_m}$,
we can now rewrite
\eqref{eq:existing-uc}
more compactly as showing that, with probability at least $1 - \delta$,
$\sup_{k \in \mathcal K} \lvert \hat J_{n^{2/3}} - J \rvert \le \alpha(\mathcal K, \PP, \QQ, n, \delta)$.

Also, notice from \eqref{eq:block-power} that
$\rho_{b,B} \to \Phi(\sqrt{b B} J - t_\alpha) =: R_{b,B}$,
the asymptotic power of a test with $b$ blocks of size $B$.

Finally, the function
$x \mapsto \Phi(\sqrt{b B} x - t_\alpha)$
is Lipschitz continuous:
\begin{align*}
    \left\lvert \frac{\partial}{\partial x} \Phi(\sqrt{b B} x - t_\alpha) \right\rvert
    = \frac{1}{\sqrt{2 \pi}} \exp\left(
        - \frac{1}{2} \left( \sqrt{b B} x - t_\alpha \right)^2
    \right)
    \le \frac{1}{\sqrt{2 \pi}}
.\end{align*}
Thus applying this function to each of the terms in \eqref{eq:existing-uc} yields that,
when we use $\lambda = n^{2/3}$,
\[
    \sup_{k \in \mathcal K} \left\lvert
        \hat\rho_{b,B} - R_{b,B}
    \right\rvert
    \le \frac{1}{\sqrt{2 \pi}} \alpha(\mathcal K, \PP, \QQ, n, \delta)
.\]

This shows uniform convergence of each $\hat\rho_{b,B}$ to the relevant asymptotic power.
By a union bound, this immediately implies uniform convergence of the objective \eqref{eq:fair-kernel}, or \eqref{eq:minimax} for a finite class of ``top-level'' kernels $\kappa$ (as we use here), to the corresponding term based on asymptotic powers.
(Convergence of \eqref{eq:minimax} over an infinite class of $\kappa$ would also follow with a similar argument to that of \citeauthor{liu2020learning}.)

\section{Training Details}\label{app:training}
\begin{table}[h]
\centering
\resizebox{\columnwidth}{!}{%
\begin{tabular}{@{}c|c|cc|cc|c@{}}
\toprule
\multirow{2}{*}{Dataset} &  & \multicolumn{2}{c|}{\# Hidden Units} & \multicolumn{2}{c|}{Optimizer} &  \\ \cmidrule(l){2-7} 
                & Input Size & Encoder ($\phi$)              & Classifier ($g$) & Type     & Learning Rate & Batch Size \\ \midrule
Adult           & 114        & 256, 128, 64, 32, 16 & 16         & Adam     & 0.0001        & 64         \\
COMPAS          & 11         & 8, 8, 8              & 8          & Adadelta & 2.0           & 64         \\
Heritage Health & 65         & 256, 128, 64, 32, 16 & 16         & Adam     & 0.0001        & 64         \\ \bottomrule
\end{tabular}%
}
\caption{Network architectures, optimizers and batch sizes used to train our models. All layers are interspersed with Leaky ReLU activations.}
\label{tab:hyperparams}
\end{table}
\Cref{tab:hyperparams} contains the architecture and optimizer details used to train our models. All models were trained for a maximum of 100 epochs and we employed early stopping on the validation loss with a patience of 20 epochs. The encoder for CFAIR and LAFTR in the case for the Adult dataset contains 60 hidden units followed by 60 units in the classifier as described in their original papers as both these methods performed poorly with the architecture in \cref{tab:hyperparams}. All sIPM models were trained without the reconstruction loss term. To compute the power in \eqref{eq:block-power-est} we set $b = \sqrt{m}$ and therefore, $B=m/b=\sqrt{m}$.

\end{document}